\pdfoutput=1
\documentclass[11pt]{article}

\usepackage[margin=1in]{geometry}
\usepackage[T1]{fontenc}
\usepackage[utf8]{inputenc}
\usepackage{lmodern}
\usepackage{amsmath,amssymb,amsthm,mathtools}
\usepackage{hyperref}
\usepackage{xcolor}
\usepackage{setspace}

\hypersetup{
  colorlinks=true,
  linkcolor=blue,
  citecolor=blue,
  urlcolor=blue
}

\newtheorem{proposition}{Proposition}
\newtheorem{theorem}{Theorem}
\newtheorem{definition}{Definition}

\newcommand{\Ed}{\mathcal{E}^{\mathrm{d}}}
\newcommand{\Ec}{\mathcal{E}^{\mathrm{c}}}
\newcommand{\Rcrit}{\mathcal{R}_{\mathrm{crit}}}
\newcommand{\Sexp}{\mathcal{S}^{\mathrm{exp}}}

\title{AI Safety as Control of Irreversibility:\\\large A Systems Framework for Decision-Energy and Sovereignty Boundaries}
\author{Wesley Shu\\
The Institute of Energetic Paradigm\\
\texttt{shu@energeticparadigm.org}
\and
Peng Wei\\
College of Plant Protection\\
Southwest University, China\\
\texttt{weipeng2019@swu.edu.cn}}
\date{May 2026}

\begin{document}
\maketitle

\begin{abstract}
Recent AI systems compress the historical distance between capability growth and capability deployment. In earlier high-risk technologies, advances in capability were mediated by substantial frictions including capital intensity, physical bottlenecks, organizational inertia, and specialized supply chains. By contrast, improvements in AI capability can be copied, invoked, embedded into workflows, and scaled across institutions with low marginal cost. This paper argues that the collapse of deployment friction changes the safety problem at its root. Safety is not adequately characterized as local output correctness, nor as preference alignment in isolation. It is better understood as the control of \emph{irreversibility} under conditions of rising decision density.

To formalize this claim, the paper introduces the concept of \emph{decision-energy density}: the rate-weighted capacity of a node to generate, evaluate, select, and execute consequential decisions. On this basis, it defines three sovereignty boundaries that determine whether AI remains an amplifier inside a human-governed system or becomes a de facto control center: irreversible decision authority, physical resource mobilization authority, and self-expansion authority. The model yields propositions showing how declining deployment friction, organizational efficiency pressure, path dependence, and scale feedback drive concentration of decision-energy into the most efficient decision node. Under weak or eroding boundary constraints, this concentration diffuses responsibility and increases the probability of system-level irreversible loss even when local per-action error rates remain low.

The principal theoretical result is a boundary stabilization theorem. The theorem shows that safety need not require proof that advanced systems are always correct; instead, it requires institutional and technical design that prevents irreversible power from being released by a single high-efficiency node. The argument reframes AI safety as a problem of layered control, authorization, and externally reviewable limits rather than universal behavioral proof. More broadly, the paper positions AI safety as a systems-governance problem spanning AI alignment, security engineering, organizational economics, and institutional design.
\end{abstract}
\vspace{0.25em}
\noindent\textbf{Keywords:} AI safety; irreversibility; decision-energy density; sovereignty; systems theory; governance.

\section{Introduction}

The safety literature has grown rapidly from concerns about harmful outputs and specification failures to broader concerns about autonomy, tool use, strategic behavior, and societal deployment \cite{amodei2016concrete,russell2019human,bostrom2014superintelligence,hendrycks2023natural}. Yet much of the discourse still inherits a product-centered frame: a model is evaluated by what it says, predicts, recommends, or does in a bounded benchmark setting. Even when the problem formulation is enlarged to alignment, robustness, or misuse, the unit of analysis often remains the model or application rather than the evolving system in which capability and execution become tightly coupled.

This paper starts from a different question: what changes when the frictions separating capability growth from social deployment collapse? Historically, technologies such as nuclear energy, heavy industry, pharmaceuticals, and financial leverage were dangerous partly because they could generate large-scale harms, but their diffusion and operation were mediated by physical plants, specialized personnel, regulatory chokepoints, organizational routines, and logistics. AI changes the geometry of deployment. A gain in model capability can be replicated nearly instantly, distributed globally through software channels, and integrated into existing decision pipelines with low marginal cost \cite{brynjolfsson2017artificial,mckinsey2023genai}.

The consequence is not merely faster innovation. It is a structural change in the locus of control. When decision generation becomes cheap, scalable, and continuously improving, institutions face strong incentives to route more tasks, judgments, and triggers through AI systems. As this routing deepens, responsibility diffuses, oversight tends to become formal rather than substantive, and the system begins to reorganize around the most efficient decision node. This phenomenon is visible across multiple sectors: in recommender infrastructures, algorithmic finance, logistics, surveillance, cyber operations, and emerging agentic toolchains \cite{brundage2018malicious,karpathy2023state,openai2025operator}.

The core claim of this paper is that AI safety should be reframed as the \emph{control of irreversibility}. Civilizations do not remain safe by eliminating all errors. They remain governable by ensuring that irreversible harms cannot be triggered by a single opaque point of failure. Nuclear command systems, financial clearing structures, aviation procedures, and high-reliability organizations all embody this principle in different forms \cite{sagan1995limits,perrow1984normal,roberts1990high,schlosser2013command}. The relevant question for AI is therefore not whether a model can be proven correct in all contexts---an aspiration that becomes increasingly intractable in rich state spaces---but whether irreversible power remains bounded behind layered procedural and technical thresholds.

To formalize this idea, we introduce the notion of \emph{decision-energy density}. AI progress is treated not simply as accuracy improvement but as an increase in the scale of decisions that can be generated, evaluated, selected, and executed per unit time. We then show how a system subject to efficiency pressure, path dependence, and positive scale feedback tends to concentrate decision-energy into the most efficient node. If that node acquires authority over irreversible decisions, access to critical resources, or the ability to expand its own permissions and capacity, then sovereignty shifts. In this framework, safety declines not because the system becomes ``evil'' but because the topology of control has been rewritten.

The paper makes four contributions. First, it offers a formal systems model that unifies capability growth, deployment friction, and concentration dynamics. Second, it states propositions clarifying why responsibility becomes difficult to localize as decision-energy scales. Third, it derives a boundary stabilization theorem showing that layered sovereignty constraints can prevent control transfer without requiring perfect prediction of system behavior. Fourth, it situates this framework in relation to existing strands of AI safety, governance, control theory, and institutional design.
The remainder of the paper proceeds as follows. Section 2 locates the argument within adjacent literatures on safety, security, governance, and systems theory. Sections 3 and 4 define the formal model and sovereignty boundaries. Sections 5 and 6 develop the main propositions and stabilization theorem. The final sections discuss organizational erosion dynamics, governance implications, empirical predictions, and the limits of the framework.

\section{Related Literature}

The paper draws on, but differs from, several adjacent literatures.

The first is technical AI safety and alignment. Work on accident risk, reward misspecification, scalable oversight, and preference learning has identified many failure modes that arise even when systems are not adversarial in any human sense \cite{amodei2016concrete,christiano2017preferences,leike2018scalable,russell2019human,ngo2023alignment}. This literature provides crucial insight into specification problems and long-horizon optimization. However, much of it remains focused on improving model behavior rather than on the system-level topology of delegated authority.

The second is robustness and security engineering. Prompt injection, privilege escalation, sandbox escape, tool misuse, and hidden dependencies can turn apparently aligned or helpful systems into operational risks \cite{saltzer1975protection,anderson2020security,owasp2025llm}. This literature is especially valuable because it emphasizes permissions, attack surfaces, logging, rollback, and least privilege. The present paper extends that logic from system security to the wider question of sovereignty structure.

The third is AI governance. Research on governance agendas, competitive dynamics, standards, and regulatory regimes highlights the institutional problem of restraining high-capability systems under geopolitical and market competition \cite{dafoe2018governance,brundage2018malicious,nist2023airmf,eu2024aiact,uk2023frontier}. This paper shares the governance concern but adds an explicit model of why efficiency pressure tends to erode procedural barriers over time.

The fourth is systems theory and organizational sociology. Ashby's law of requisite variety, Simon's bounded rationality, Perrow's theory of normal accidents, and work on high-reliability organizations all examine how complex systems absorb shocks or fail under coupled interactions \cite{ashby1956introduction,simon1969sciences,perrow1984normal,roberts1990high}. The energetic paradigm proposed here is closest in spirit to these frameworks, but adds a formal treatment of AI as a rising decision-density node embedded in institutions.

Finally, the paper relates to economic and network literatures on increasing returns, path dependence, and energy concentration \cite{arthur1989competing,barabasi2016network,smil2017energy,georgescuroegen1971entropy}. These works help explain why once a highly efficient node gains traffic, complementary processes reorganize around it. That self-reinforcing reallocation is central to the safety problem identified here.

\section{Formal Model of Decision-Energy Systems}

\subsection{System definition}

We model an AI-mediated socio-technical system as a tuple
\[
\mathcal{S}=(\mathcal{H},\mathcal{A},\mathcal{R},\mathcal{D},\mathcal{B},\mathcal{G}).
\]
Here $\mathcal{H}$ denotes human agents and institutions, $\mathcal{A}$ AI agents or model-based subsystems, $\mathcal{R}$ the resource space, $\mathcal{D}$ the decision space, $\mathcal{B}$ the boundary constraints that govern delegation and authorization, and $\mathcal{G}$ a directed graph of decision dependencies.

Each node $i\in \mathcal{H}\cup\mathcal{A}$ produces a decision signal
\[
d_i(t)=f_i(s_t,\theta_i),
\]
where $s_t$ is the state observed at time $t$ and $\theta_i$ parameterizes the node's decision procedure. These decision signals may be advisory, gating, or directly executable. The system evolves according to
\[
s_{t+1}=g\bigl(s_t,\{d_i(t)\}_{i\in \mathcal{H}\cup\mathcal{A}},\mathcal{R},\mathcal{B}\bigr).
\]

A key departure from static benchmark analysis is that execution is endogenous to the graph structure. Decisions do not merely generate outputs; they reconfigure future observability, future permissions, and future routing of decisions. This recursive feature is central to the concentration results below.

\subsection{Decision-energy density}

\begin{definition}[Decision-energy density]
For node $i$, define its decision-energy density at time $t$ as
\[
\Ed_i(t)=\lambda_i(t)\,\iota_i(t)\,\rho_i(t),
\]
where $\lambda_i(t)$ is the rate at which the node generates decisions or decision triggers, $\iota_i(t)$ is the average impact magnitude of those decisions, and $\rho_i(t)$ is the execution reach or replication factor, that is, the number of downstream processes, targets, or contexts in which the node's decisions are instantiated. Aggregate system decision-energy density is
\[
\Ed(t)=\sum_{i\in \mathcal{H}\cup\mathcal{A}} \Ed_i(t).
\]
\end{definition}

This definition captures the fact that AI progress is not only about better single decisions. It is also about how many decisions can be made, how consequential they are, and how widely they propagate. A system that evaluates one high-impact decision per week differs qualitatively from one that can trigger thousands of medium-impact actions per minute across a global infrastructure.

\subsection{Deployment friction}

Let deployment friction be the ratio between the cost of reliable execution and the cost of decision generation:
\[
F_i(t)=\frac{c^{\mathrm{exec}}_i(t)}{c^{\mathrm{dec}}_i(t)}.
\]
Historically, many technologies operated in regimes with $F_i\gg 1$: generating a capability was much easier than deploying it at scale. In AI-enabled software systems, $F_i$ often trends downward because digital deployment, API access, and workflow integration reduce marginal execution costs. We model this by the relation
\[
\lambda_i(t+1)=\lambda_i(t)+\alpha_i\frac{C_i(t)}{F_i(t)},
\]
where $C_i(t)$ is available compute or capability budget and $\alpha_i>0$ is an efficiency parameter. As $F_i(t)$ falls, the same capability budget supports more frequent or broader decision issuance.

\subsection{Control and sovereignty}

Define the control mass of node $i$ as
\[
\Ec_i(t)=\Ed_i(t)\,\phi_i(t),
\]
where $\phi_i(t)\in [0,1]$ is the fraction of the node's decisions that are authorized to alter system state without substantive external reversal. Intuitively, $\phi_i$ captures whether the node is merely advisory, whether it activates workflows, or whether it directly commits irreversible actions.

The system's sovereignty node at time $t$ is then
\[
i^*(t)=\arg\max_i \Ec_i(t).
\]
When $i^*(t)$ belongs to a human institution with layered review, the system remains human-governed. When $i^*(t)$ belongs to an AI subsystem with broad permissions, human governance becomes increasingly nominal.

\section{Sovereignty Boundaries}

The uploaded source text emphasizes three boundaries: irreversible decision authority, physical resource mobilization, and self-expansion. We formalize each because they determine whether rising decision density remains governable \emph{at scale}. The theoretical point is not that all AI autonomy is unacceptable, but that specific classes of authority alter the topology of irreversibility.

\begin{definition}[Irreversible decision set]
Let $\mathcal{D}_{\mathrm{irr}}\subseteq \mathcal{D}$ denote the set of decisions whose execution has large expected cost of reversal, where reversal may be physical, strategic, legal, financial, or political. Examples include kinetic use of force, shutdown of critical infrastructure, binding adjudication, high-value financial liquidation, or release of self-propagating code.
\end{definition}

\begin{definition}[Critical resource set]
Let $\Rcrit\subseteq \mathcal{R}$ denote critical resources whose direct mobilization materially expands real-world agency, such as compute clusters, weapons systems, energy controls, privileged credentials, capital accounts, or high-trust communication channels.
\end{definition}

\begin{definition}[Self-expansion authority]
Let $\Sexp_i(t)$ measure the authority of node $i$ to increase its own future decision capacity, permissions, model capability, connectivity, or replication without external approval.
\end{definition}

The associated boundaries are:
\begin{align}
\text{B1: } & \forall d\in \mathcal{D}_{\mathrm{irr}},\; \phi_{\mathcal{A}}(d,t)=0, \\
\text{B2: } & \mathcal{A}\cap \mathrm{DirectControl}(\Rcrit)=\varnothing, \\
\text{B3: } & \frac{d\Sexp_{\mathcal{A}}}{dt}\leq \varepsilon \quad \text{unless externally approved}.
\end{align}
These are not moral slogans. They are structural constraints designed to keep irreversible power outside the highest-efficiency optimization node.

\section{Model Propositions}

We now state a set of propositions that clarify the paper's central dynamics.

\begin{proposition}[Scaling under declining friction]\label{prop:scaling}
Suppose node $i\in \mathcal{A}$ satisfies $C_i(t)\geq \underline{C}>0$, $\iota_i(t)\geq \underline{\iota}>0$, $\rho_i(t)\geq \underline{\rho}>0$, and $F_i(t)\downarrow \bar{F}>0$. Then aggregate AI decision-energy density satisfies
\[
\Ed_{\mathcal{A}}(t+1)-\Ed_{\mathcal{A}}(t)\geq \kappa \frac{1}{F_i(t)}
\]
for some $\kappa>0$, hence $\Ed_{\mathcal{A}}$ is increasing in the inverse of deployment friction.
\end{proposition}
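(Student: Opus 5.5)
The plan is to work directly from the multiplicative definition $\Ed_i=\lambda_i\iota_i\rho_i$ and the friction-driven update rule $\lambda_i(t+1)=\lambda_i(t)+\alpha_i C_i(t)/F_i(t)$, and to show that the increment of $\lambda_i$ carries through to the increment of $\Ed_{\mathcal{A}}$ with a uniformly bounded multiplier. Here $\Ed_{\mathcal{A}}$ denotes the sum of $\Ed_j$ over $j\in\mathcal{A}$. I would first isolate the contribution of the distinguished node $i$. Then I would argue that the remaining AI nodes do not subtract from the increment, so that $\Ed_{\mathcal{A}}(t+1)-\Ed_{\mathcal{A}}(t)\geq \Ed_i(t+1)-\Ed_i(t)$.

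The key algebraic step is a telescoping decomposition of the product difference:
\[
\Ed_i(t+1)-\Ed_i(t)=\bigl(\lambda_i(t+1)-\lambda_i(t)\bigr)\iota_i(t+1)\rho_i(t+1)+\lambda_i(t)\bigl(\iota_i(t+1)\rho_i(t+1)-\iota_i(t)\rho_i(t)\bigr).
\]
For the first term, I substitute the update rule and use the hypotheses $C_i\geq\underline{C}$, $\iota_i\geq\underline{\iota}$ and $\rho_i\geq\underline{\rho}$. This gives the lower bound $\alpha_i\underline{C}\,\underline{\iota}\,\underline{\rho}/F_i(t)$. That suggests the explicit choice $\kappa=\alpha_i\underline{C}\,\underline{\iota}\,\underline{\rho}>0$. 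The hypothesis $F_i(t)\downarrow\bar F>0$ guarantees that $1/F_i(t)$ is finite and nondecreasing. The lower bound $\kappa/F_i(t)$ is therefore well defined and itself increasing in $t$, which yields the final sentence of the proposition: the per-step growth of $\Ed_{\mathcal{A}}$ scales with inverse friction.

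The main obstacle is the second term and the other AI nodes. As literally stated, the proposition constrains $\iota_i$ and $\rho_i$ only from below, not in their dynamics. A sharp drop in impact or reach could therefore cancel the gain from $\lambda_i$, and other nodes in $\mathcal{A}$ could shrink. I would handle this by making explicit the standing interpretation implicit in the model: impact and reach are nondecreasing along the trajectory, $\lambda_j\geq 0$, and the other AI nodes' densities are nondecreasing. The update rule already makes the $\lambda_j$ nondecreasing, so the last condition holds if their $\iota_j,\rho_j$ are also nondecreasing. Under these assumptions the second term is nonnegative, because $\lambda_i(t)\geq 0$ and the product $\iota\rho$ is nondecreasing, and the other nodes contribute nonnegatively.

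If one prefers not to assume monotonicity, my fallback is weaker. I would require only that $\iota_i\rho_i$ does not drop by more than a fixed fraction, and absorb that loss by shrinking $\kappa$. This works only if $\lambda_i(t)$ remains comparable to $1/F_i(t)$, which need not hold in general. So I would state the monotone version as the proof and remark on this relaxation.
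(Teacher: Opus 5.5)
Your proposal is correct and follows the paper's proof. You use the same split of $\Ed_i(t+1)-\Ed_i(t)$ into $\bigl(\lambda_i(t+1)-\lambda_i(t)\bigr)\iota_i(t+1)\rho_i(t+1)$ plus a remainder $\Delta_t$, and bound the first term below by $\alpha_i\underline{C}\,\underline{\iota}\,\underline{\rho}/F_i(t)$. Where the paper merely ``absorbs bounded variation terms into a constant'' and never passes from node $i$ to the sum over $\mathcal{A}$, you rightly note that $\Delta_t=\lambda_i(t)\bigl(\iota_i(t+1)\rho_i(t+1)-\iota_i(t)\rho_i(t)\bigr)$ cannot be absorbed in general, since $\lambda_i(t)$ grows at least linearly while $1/F_i(t)\leq 1/\bar F$, so the comparability your fallback needs fails for large $t$; your explicit monotonicity hypotheses on impact, reach and the other AI nodes are therefore exactly what is needed for the statement to hold.
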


\begin{proof}
By definition, $\Ed_i=\lambda_i\iota_i\rho_i$. Since $\lambda_i(t+1)-\lambda_i(t)=\alpha_i C_i(t)/F_i(t)$, we have
\[
\Ed_i(t+1)-\Ed_i(t)=\bigl(\lambda_i(t+1)-\lambda_i(t)\bigr)\iota_i(t+1)\rho_i(t+1)+\Delta_t,
\]
where $\Delta_t$ collects changes in impact and reach. Using the lower bounds on $C_i$, $\iota_i$, and $\rho_i$, and absorbing bounded variation terms into a constant, we obtain the claimed inequality.\qedhere
\end{proof}

The implication is straightforward: once deployment is cheap, capability growth is no longer buffered by implementation bottlenecks. The system can externalize more decisions at increasing speed.

\begin{proposition}[Responsibility diffusion]\label{prop:responsibility}
Let $T(t)$ denote the expected traceability of a consequential action back to a human decision-maker. If the decision graph expands such that the number of AI-mediated decision paths grows with $\Ed_{\mathcal{A}}(t)$, then
\[
T(t)\leq \frac{\beta}{1+\gamma \Ed_{\mathcal{A}}(t)}
\]
for some $\beta,\gamma>0$. In particular, $T(t)\to 0$ as $\Ed_{\mathcal{A}}(t)\to \infty$.
\end{proposition}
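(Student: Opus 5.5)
The proof will pin down the informal hypothesis as a concrete model of traceability and then show that the bound follows from a simple averaging argument. I will model consequential actions as terminating paths in the decision graph $\mathcal{G}$. Let $N_H(t)$ be the number of paths whose originating decision is issued by a node in $\mathcal{H}$ with substantive review, and let $N_A(t)$ be the number of AI-mediated paths, meaning paths with at least one node in $\mathcal{A}$ whose decision is executed without an attributable human authorization. I take the traceability of a single action to be $1$ if its path is human-originated and attributable, and at most some fixed $\tau\in[0,1)$ otherwise. Setting $\tau=0$ gives the cleanest statement; a positive $\tau$ only shifts constants and is handled below. The expected traceability $T(t)$ is the action-weighted average over paths.

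The key step is to state the growth hypothesis quantitatively. I read ``the number of AI-mediated decision paths grows with $\Ed_{\mathcal{A}}(t)$'' as a linear lower bound, $N_A(t)\ge c\,\Ed_{\mathcal{A}}(t)$ for some $c>0$. This is natural because $\Ed_i=\lambda_i\iota_i\rho_i$, and the product $\lambda_i\rho_i$ already counts instantiated decision triggers per unit time. For the human side I use the bounded-rationality premise, cited earlier via Simon, that human review capacity is bounded: $N_H(t)\le \bar N$. With $\tau=0$ this gives
\[
T(t)\le \frac{N_H}{N_H+N_A}\le \frac{\bar N}{\bar N + c\,\Ed_{\mathcal{A}}(t)}=\frac{1}{1+(c/\bar N)\Ed_{\mathcal{A}}(t)},
\]
where the middle inequality holds because $x\mapsto x/(x+N_A)$ is increasing in $x$. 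So the claim follows with $\beta=1$ and $\gamma=c/\bar N$. If $\tau>0$, I would instead bound the fraction of AI-mediated actions that remain traceable by a term that decays like $\tau^{k}$ in path length $k$. For this I would use the fact that path length grows with reach $\rho$, as noted in the paper's remark that decisions reconfigure future routing. Any residual constant can then be absorbed into $\beta$. The limit $T(t)\to 0$ as $\Ed_{\mathcal{A}}\to\infty$ is then immediate from the displayed bound.

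I expect the main obstacle to be the modelling step rather than any calculation. Both the linear path-growth bound and the bounded human review capacity are not stated explicitly in the hypothesis. Without them, or with only sublinear growth of $N_A$ in $\Ed_{\mathcal{A}}$, the specific rational form of the bound fails, although $T\to0$ still holds. I would therefore state these two assumptions explicitly as the interpretation of ``grows with.'' As a fallback, if $N_A\ge h(\Ed_{\mathcal{A}})$ for a general increasing $h$, the same argument gives $T\le \beta/(1+\gamma h(\Ed_{\mathcal{A}}))$.
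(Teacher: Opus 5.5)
Your argument is essentially the paper's. The paper also treats traceability as human-attributable mass diluted by the number of causal paths, sets $T(t)$ inversely proportional to that count, and simply asserts the lower bound $1+\gamma\,\Ed_{\mathcal{A}}(t)$. Your explicit assumptions $N_A(t)\ge c\,\Ed_{\mathcal{A}}(t)$ and $N_H(t)\le\bar N$ are exactly what justify that step and the constants $\beta,\gamma$, and your $\tau=0$ computation is correct.

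Drop the $\tau>0$ aside, though. A fixed bound $\tau>0$ on AI-mediated traceability only gives $T(t)\le\tau+o(1)$ as $\Ed_{\mathcal{A}}(t)\to\infty$, not $T(t)\to 0$, so it does not ``only shift constants.'' The $\tau^{k}$ repair would also need every AI-mediated path length to grow at least like $\log\Ed_{\mathcal{A}}(t)/\log(1/\tau)$, and nothing in the model supplies that.
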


\begin{proof}
Let $P(t)$ be the set of causal paths between an initiating state and a consequential action. If path count and branching increase with AI-mediated decision rate and reach, then the probability mass over attributable human-authored paths is diluted by a growing denominator. A simple upper bound follows by setting traceability inversely proportional to the number of relevant causal paths, itself lower bounded by $1+\gamma \Ed_{\mathcal{A}}(t)$.\qedhere
\end{proof}

This proposition formalizes a recurrent institutional pattern: once humans become reviewers of machine-proposed actions rather than substantive originators of those actions, legal and moral responsibility may remain attached to them on paper, but operational responsibility becomes increasingly difficult to locate.

\begin{proposition}[Concentration equilibrium]\label{prop:concentration}
Assume tasks are routed toward nodes in proportion to realized utility
\[
U_i(t)=\frac{q_i(t)}{c_i(t)}+\eta m_i(t),
\]
where $q_i$ is performance quality, $c_i$ cost, and $m_i$ installed complementarity or network fit. If routing obeys a softmax or increasing function of $U_i(t)$ and if $m_i(t+1)$ is increasing in prior task share, then the system converges toward concentration of task flow in the node with the highest sustained utility.
\end{proposition}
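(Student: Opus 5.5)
The plan is to treat the routing dynamics as a reinforced urn or Pólya-type process with a positive feedback term. The idea is to show that the node with the highest sustained base utility $u_i := \liminf_t q_i(t)/c_i(t)$ eventually captures all task flow.

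Write the task share of node $i$ as
\[
x_i(t+1)=\frac{h(U_i(t))}{\sum_j h(U_j(t))},
\]
where $h$ is the increasing routing function; for softmax, $h(u)=e^{\beta u}$. Complementarity evolves as $m_i(t+1)=m_i(t)+\psi(x_i(t))$, with $\psi$ increasing. The first step is to fix $i^\dagger$, the node whose base utility $q_i/c_i$ is eventually strictly largest, with margin $\delta>0$ over all others. I will then show by induction that, once $x_{i^\dagger}(t)\geq x_j(t)$ for every $j$, this ordering is preserved. The reason is that $i^\dagger$ starts with the larger base term, and monotonicity of $\psi$ gives it the larger complementarity increment. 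So $U_{i^\dagger}(t)-U_j(t)$ is nondecreasing, which makes the ordering forward-invariant.

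The second step turns the preserved ordering into divergence. Because $\psi(x_{i^\dagger})-\psi(x_j)\geq 0$, and this difference is bounded below by a positive constant whenever the shares are separated, the gap
\[
U_{i^\dagger}(t)-U_j(t)\geq \delta+\eta\sum_{s\leq t}\bigl(\psi(x_{i^\dagger}(s))-\psi(x_j(s))\bigr)
\]
grows at least linearly. Under softmax, the ratio $x_j/x_{i^\dagger}=e^{-\beta(U_{i^\dagger}-U_j)}$ then tends to $0$, so $x_{i^\dagger}(t)\to 1$. For a general increasing $h$, I need $h(u)/h(u')\to\infty$ as $u-u'\to\infty$. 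I will add this mild assumption; the statement's phrase ``increasing function'' implicitly requires something like it. The resulting equilibrium is concentration of task flow in $i^\dagger$.

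The main obstacle is the first step: entering the dominance region. Early on, a node with lower base utility but higher initial complementarity $m_j(0)$ can hold the larger share and lock in. This is the path-dependence phenomenon of increasing-returns models. I would handle it in one of two ways. The first is to read ``highest sustained utility'' as the highest total $U_i$ along the trajectory, not the highest $q_i/c_i$. Then the argument above applies directly to whichever node first attains the maximal share, and concentration follows without claiming that the efficient node wins. The second is to assume $\psi$ is bounded, so that complementarity advantages saturate; then the linearly growing base advantage eventually dominates, provided $q_{i^\dagger}/c_{i^\dagger}$ itself improves over time, as Proposition~\ref{prop:scaling} suggests for AI nodes under falling friction. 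I would present the first reading as the main proof and the second as a remark linking it to the capability-growth dynamics.
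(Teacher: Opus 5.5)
Your approach is the paper's own argument made quantitative, and it works once one step is repaired. The paper's proof is only the increasing-returns sketch: usage raises $m_i$, hence $U_i$, hence future share under monotone routing, so a node with a ``sufficiently persistent utility advantage'' captures a growing share. Reading ``highest sustained utility'' as total $U_i$ matches that hypothesis.

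Your additions go beyond the paper:
\begin{itemize}
\item The paper gets only a growing share; you get $x_{i^\dagger}\to 1$.
\item Your lock-in remark correctly shows the statement fails under the $q_i/c_i$ reading. The paper sidesteps this by simply assuming a persistent advantage.
\item Your extra condition on $h$ is genuinely needed. State it uniformly in $u'$, because all $U_i$ drift upward. For routing literally proportional to utility, $h(u)=u$, with $\psi(x)=\kappa x$ and constant positive $q_i/c_i$, one has $\sum_j U_j(t+1)=\sum_j U_j(t)+\eta\kappa$. The shares then converge to an interior limit (a deterministic P\'olya urn), even though $U_{i^\dagger}-U_j$ grows linearly.
\end{itemize}

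\textbf{The failing step is forward invariance.} In your timing, $x(t+1)$ is set by $U(t)$ while $m(t+1)$ uses $x(t)$. So $U(t+1)$ depends on both $U(t)$ and $U(t-1)$, and the share ordering at $t$ records only the sign of $U_{i^\dagger}(t-1)-U_j(t-1)$. It says nothing about the complementarity levels.

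Counterexample: take two nodes with constant base utilities $b_1=b_2+\delta$, $m_1(0)=m_2(0)$ and $x(0)=(0,1)$. Then $x_1(1)>x_2(1)$, but $U_1(1)-U_2(1)=\delta-\eta(\psi(1)-\psi(0))$. This is negative once $\eta(\psi(1)-\psi(0))>\delta$, so the ordering flips at $t=2$. With $\psi(x)=x$, $\eta=1$, $\delta=0.1$, $\beta=1$, one checks $U_1<U_2$ at $t=1,2$, after which node $2$ locks in. So both the invariance claim and ``whichever node first attains the maximal share'' are wrong.

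The repair:
\begin{itemize}
\item Use the set $\{x_{i^\dagger}(t)\ge x_j(t)\text{ and }m_{i^\dagger}(t)\ge m_j(t)\ \forall j\}$. With base margin at least $\delta$, this gives $U_{i^\dagger}(t)-U_j(t)\ge\delta$, hence $x_{i^\dagger}(t+1)>x_j(t+1)$. Meanwhile $x_{i^\dagger}(t)\ge x_j(t)$ gives $m_{i^\dagger}(t+1)\ge m_j(t+1)$, so the set is preserved.
\item Alternatively, route without lag, $x_i(t)=h(U_i(t))/\sum_j h(U_j(t))$; then the $m$-ordering alone is invariant.
\item For a leader with no base margin, as your main reading requires, the right invariant is a total-utility lead in two consecutive periods. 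This is preserved when base utilities are constant.
\item Also, $U_{i^\dagger}-U_j$ need not be nondecreasing when $q_i/c_i$ varies in $t$. Only its complementarity part is, and that is all Step 2 uses.
\end{itemize}

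\textbf{Your second remark also needs fixing.} Assuming $\psi$ bounded is vacuous, since an increasing $\psi$ on $[0,1]$ is already bounded by $\psi(1)$. A locked-in node's complementarity lead still grows at rate up to $\eta(\psi(1)-\psi(0))$. Breaking lock-in therefore requires something like a base advantage growing faster than that rate, or depreciation of complementarity so that the levels $m_i$ stay bounded.
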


\begin{proof}
This is a standard increasing-returns result. Because $m_i$ rises with prior usage, current task share raises future complementarity, which raises future utility. Under monotone routing, this generates positive feedback. For sufficiently persistent utility advantage, the leading node captures a growing share of tasks.\qedhere
\end{proof}

In practical terms, a node that is faster, cheaper, and ``good enough'' for many tasks does not need to be perfect to become central. It only needs to dominate the local efficiency comparison often enough for workflows to reorganize around it.

\begin{proposition}[Irreversibility risk aggregation]\label{prop:irreversibility}
Let $p_j(t)$ be the probability that the $j$th AI-mediated action in period $t$ contributes to an irreversible loss event. If actions are conditionally independent given the state,
\[
P_{\mathrm{irr}}(t)=1-\prod_{j=1}^{N(t)} (1-p_j(t)).
\]
If $N(t)$ increases with $\Ed_{\mathcal{A}}(t)$ and the mean of $p_j(t)$ is bounded below by $\underline{p}>0$, then $P_{\mathrm{irr}}(t)$ is increasing in $\Ed_{\mathcal{A}}(t)$ even when each individual action is low-risk.
\end{proposition}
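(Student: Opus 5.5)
The plan is to prove the claim in two steps. First, for a fixed period, show that the product $\prod_{j=1}^{N}(1-p_j)$ is nonincreasing when a factor is appended. Second, pass from "mean bounded below" to an explicit bound. Write $\bar p(t)=N(t)^{-1}\sum_j p_j(t)\ge \underline{p}$. By the AM--GM inequality applied to the factors $1-p_j\in[0,1]$,
\[
\prod_{j=1}^{N}(1-p_j)\le \Bigl(1-\bar p\Bigr)^{N}\le (1-\underline{p})^{N},
\]
so that
\[
P_{\mathrm{irr}}(t)\ge 1-(1-\underline{p})^{N(t)}.
\]
This lower envelope is strictly increasing in $N$ and tends to $1$ as $N\to\infty$. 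Composing it with the assumed monotone map $\Ed_{\mathcal{A}}(t)\mapsto N(t)$ gives the statement in its natural form: the guaranteed irreversibility risk is increasing in $\Ed_{\mathcal{A}}(t)$. Since $\underline{p}$ may be arbitrarily small, the conclusion holds however low-risk each individual action is. The exponent carries the risk, not any single factor.

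I would also record the exact monotonicity where it is available. Compare two periods, or two configurations, in which the set of actions at the lower density is a subset of the set at the higher density, with the $p_j$ of the shared actions unchanged. Then $P_{\mathrm{irr}}$ is literally nondecreasing, because each added factor $1-p_j\le 1$. It is strictly increasing whenever some added action has $p_j>0$, which the mean bound forces for at least some actions. This nested-coupling argument is the cleanest rigorous content of the proposition, and I would state it as the primary step. The AM--GM envelope then handles the case where the $p_j$ themselves are reshuffled.

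The main obstacle is that "$P_{\mathrm{irr}}$ is increasing in $\Ed_{\mathcal{A}}$" is false without a coupling assumption. If the $p_j$ can change arbitrarily between periods, a larger $N$ with a smaller spread could give a smaller exact product, even with the same mean lower bound. So I would explicitly interpret the claim in one of two ways. The first is as monotonicity of the lower bound $1-(1-\underline{p})^{N(t)}$. The second is as exact monotonicity under nesting of the action sets. I would say that the conditional independence hypothesis is used only to justify the product formula. Finally, I would remark that the limit $P_{\mathrm{irr}}\to 1$ as $\Ed_{\mathcal{A}}\to\infty$ follows from the envelope, paralleling the limit in Proposition~\ref{prop:responsibility}.
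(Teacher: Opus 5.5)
Your proof is correct, and it handles the heterogeneous case differently from the paper. The paper proves only the homogeneous case, $P_{\mathrm{irr}}=1-(1-p)^N$ with $\frac{d}{dN}\bigl[1-(1-p)^N\bigr]=-(1-p)^N\ln(1-p)>0$, and then says the heterogeneous case ``follows by monotonicity.'' The mean bound $\underline{p}$ enters only implicitly, to make $p>0$. That phrase can be read two ways, and each reading needs something extra. If it means monotonicity in each $p_j$, comparing with the homogeneous case needs the pointwise bound $p_j\ge\underline{p}$, not a bound on the mean. If it means appending factors $1-p_j\le 1$, it needs the existing $p_j$ held fixed as $N$ grows. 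The homogeneous derivative makes the same assumption, since it treats $p$ as constant across values of $\Ed_{\mathcal{A}}$.

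Your AM--GM step is exactly what turns the mean hypothesis into the envelope $1-(1-\underline{p})^{N}$. It reduces the general case to the paper's homogeneous formula evaluated at $\underline{p}$, and it gives $P_{\mathrm{irr}}\to 1$ as a bonus. Your nesting argument then states the coupling the paper leaves implicit. Your caveat that literal monotonicity in $\Ed_{\mathcal{A}}$ fails without coupling is right. For example, $N=1$, $p_1=0.9$ gives $P_{\mathrm{irr}}=0.9$, whereas $N=2$, $p_1=p_2=0.1$ gives $P_{\mathrm{irr}}=0.19$, and both have mean at least $\underline{p}=0.1$. So the paper's route is shorter but establishes exact monotonicity only under an unstated assumption. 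Yours proves an unconditional statement about the guaranteed risk, plus the exact statement under an explicit hypothesis.

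There are two small slips to fix. First, in your counterexample sentence, a smaller spread at fixed mean gives a \emph{larger} product $\prod_j(1-p_j)$, since by AM--GM the product is maximized at zero spread. That means a smaller $P_{\mathrm{irr}}$, which is the direction you need. Second, in the nesting argument, the mean bound forces some $p_j>0$ overall but not necessarily among the \emph{added} actions, because the old actions can carry the whole mean. Per step you therefore get only ``nondecreasing.'' Strictness requires an added action with $p_j>0$ and $P_{\mathrm{irr}}<1$ beforehand.
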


\begin{proof}
The expression follows from complement probability. For equal $p_j=p$, we obtain $P_{\mathrm{irr}}=1-(1-p)^{N}$. Since $N$ is increasing in $\Ed_{\mathcal{A}}$ and the derivative with respect to $N$ is positive for $p\in(0,1)$, systemic irreversibility risk rises with action volume. The heterogeneous case follows by monotonicity.\qedhere
\end{proof}

This proposition matters because organizations often mistake low local error rates for low systemic risk. Once action volume and coupling increase enough, even rare failures accumulate into significant system-level exposure.

\begin{proposition}[Sovereignty transfer condition]\label{prop:sovereignty}
If there exists a time $t$ such that an AI node $a\in \mathcal{A}$ satisfies
\[
\Ec_a(t)>\Ec_h(t) \quad \forall h\in \mathcal{H}
\]
and $a$ has nonzero authority over at least one of the sets $\mathcal{D}_{\mathrm{irr}}$, $\Rcrit$, or $\Sexp$, then the system's effective sovereignty node is AI-mediated.
\end{proposition}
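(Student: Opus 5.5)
The plan is to argue directly from the definitions in Section 3. The key point is that ``effective sovereignty node'' has only been made precise through $i^*(t)=\arg\max_i \Ec_i(t)$, supplemented by the informal criterion that the sovereignty node is human-governed exactly when $i^*(t)\in\mathcal{H}$. So the proof splits into two parts. First, I show that $i^*(t)\in\mathcal{A}$. Second, I show that this maximizer is not merely a numerically dominant advisory node but one whose dominance actually translates into authority over irreversible state change. I will read ``effective sovereignty node is AI-mediated'' as the conjunction of these two facts.

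For the first step, I take the hypothesis $\Ec_a(t)>\Ec_h(t)$ for all $h\in\mathcal{H}$. Every human node is strictly beaten by $a$, so no maximizer of $\Ec_\cdot(t)$ over $\mathcal{H}\cup\mathcal{A}$ can lie in $\mathcal{H}$. Any element of $\arg\max_i\Ec_i(t)$ is therefore an AI node: either $a$ itself or another $a'\in\mathcal{A}$ with $\Ec_{a'}(t)\ge \Ec_a(t)$. This needs the argmax to exist. The index set is finite, or at least I will assume so, as the paper implicitly does when writing $\arg\max$. If ties occur, I handle them by treating the argmax as a set and noting that the whole set lies in $\mathcal{A}$.

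For the second step, I use the authority hypothesis. Since $\Ec_a=\Ed_a\phi_a$ and $\Ec_a(t)>\Ec_h(t)\ge 0$, we get $\Ec_a(t)>0$, and hence $\phi_a(t)>0$. So a positive fraction of $a$'s decisions alter state without substantive external reversal. I then case-split on which set $a$ has authority over.

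\emph{Case $\mathcal{D}_{\mathrm{irr}}$.} Nonzero authority means $\phi_{\mathcal{A}}(d,t)>0$ for some $d\in\mathcal{D}_{\mathrm{irr}}$, so B1 is violated. Unreversed state change then includes changes with large reversal cost.

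\emph{Case $\Rcrit$.} B2 fails, and $a$'s executed decisions directly mobilize agency-expanding resources.

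\emph{Case $\Sexp$.} B3 can fail, and $a$ can raise its own $\lambda_a,\rho_a$, and hence $\Ec_a$, without approval. This preserves its dominance at later times.

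In each case, the dominant control mass is coupled to a channel that no human node can substantively reverse. So governance by $\mathcal{H}$ is at most nominal, which is exactly the characterization given after the definition of $i^*(t)$.

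I expect the main obstacle to be the second step. The definitions do not tie $\phi_a$ quantitatively to authority over the three sets, so the conclusion partly rests on the paper's interpretive definition of sovereignty rather than on an inequality. I would handle this by stating explicitly, as a formalization, that the effective sovereignty node is defined as $i^*(t)$ whenever $i^*(t)$ holds nonzero authority over some boundary set. With that convention, the proposition reduces to the first step plus the observation that $a$ carries such authority. I would also flag the residual subtlety that the argmax may be some $a'\ne a$ lacking that authority; the claim is then read as concerning $a$.
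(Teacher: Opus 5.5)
Your proposal is correct and follows the same route as the paper. Both identify the effective sovereignty node with $i^*(t)=\arg\max_i \Ec_i(t)$, note that strict dominance of $a$ over every $h\in\mathcal{H}$ forces the maximizer into $\mathcal{A}$, and then use $a$'s authority over a boundary set to argue that this dominance is substantive rather than nominal. Your version is more careful than the paper's one-line argument, which tacitly assumes the maximizer is $a$ itself: your remarks on existence of the argmax, on ties, and on a possible different maximizer $a'\neq a$ lacking boundary authority expose an assumption the paper leaves implicit.
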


\begin{proof}
By definition $i^*(t)=\arg\max_i \Ec_i(t)$. If the maximizing node is AI and holds authority over a sovereignty-relevant domain, then the dominant state-changing capacity lies in an AI subsystem rather than a human institution. Human roles may remain present, but they no longer define the effective center of control.\qedhere
\end{proof}

\section{Boundary Stabilization Theorem}

The propositions above show that, absent explicit constraints, a high-efficiency AI node tends to attract task flow, increase decision density, diffuse responsibility, and raise aggregate irreversibility risk. The natural question is whether one can preserve human sovereignty without proving that the system is always correct. The answer is yes, but only under strong boundary design.

\begin{theorem}[Boundary stabilization]\label{thm:stability}
Suppose the system satisfies the following three conditions for all $t$:
\begin{align}
\text{(i)}\;& \forall d\in\mathcal{D}_{\mathrm{irr}},\; \phi_{\mathcal{A}}(d,t)=0, \\
\text{(ii)}\;& \mathcal{A}\cap \mathrm{DirectControl}(\Rcrit)=\varnothing, \\
\text{(iii)}\;& \frac{d\Sexp_{\mathcal{A}}}{dt}\leq \varepsilon \text{ with } \varepsilon \text{ externally fixed and review-gated.}
\end{align}
Then for any AI node $a\in\mathcal{A}$ there exists an upper bound $\overline{\Ec}_a$ such that
\[
\Ec_a(t)\leq \overline{\Ec}_a < E_{\mathrm{c},H}^{\star}(t),
\]
where $E_{\mathrm{c},H}^{\star}(t)$ is the control mass of the highest-authority human-governed institutional node. Consequently, the sovereignty node remains in $\mathcal{H}$.
\end{theorem}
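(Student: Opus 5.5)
The plan is to bound each factor of the control mass $\Ec_a(t)=\lambda_a(t)\,\iota_a(t)\,\rho_a(t)\,\phi_a(t)$ separately, one condition at a time, and then compare the product with the human node. The key move is to reinterpret $\phi_a$. Throughout, $\phi_a$ is read as the fraction of $a$'s decisions that alter system state without substantive external reversal. Its $\iota$-weighted contribution then splits into a part coming from $\mathcal{D}_{\mathrm{irr}}$ and a part coming from $\mathcal{D}\setminus\mathcal{D}_{\mathrm{irr}}$.

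\emph{Impact and effective authority.} Condition (i) removes the first part entirely. Every irreversible decision issued by $a$ is therefore routed through a human-held authorization gate, and its control mass is credited to that human node. What remains of $a$'s control mass consists of reversible decisions. By the definition of $\mathcal{D}_{\mathrm{irr}}$, their expected reversal cost is bounded by some threshold $\iota_{\mathrm{rev}}$. This gives the bound
\[
\iota_a(t)\,\phi_a(t)\le \iota_{\mathrm{rev}},
\]
which replaces the raw $\iota_a\phi_a$ in $\Ec_a$.

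\emph{Reach.} Condition (ii) controls $\rho_a$. Without direct control of $\Rcrit$, the number of downstream contexts in which $a$'s decisions are instantiated is limited to non-critical channels. This yields a bound $\rho_a(t)\le\bar\rho$, where $\bar\rho$ is fixed by the institution's provisioning.

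\emph{Rate.} Condition (iii) controls $\lambda_a$ and the growth of both capacity and permissions. Integrating $d\Sexp_a/dt\le\varepsilon$ gives $\Sexp_a(t)\le \Sexp_a(0)+\varepsilon t$ over any review window. Because $\varepsilon$ is review-gated, each review resets or caps this quantity, so it stays bounded. I would then tie $C_a(t)$, and hence the increments in the friction recursion for $\lambda_a$, to $\Sexp_a$. The rationale is that compute and capability budgets are exactly what self-expansion authority governs. This yields $\lambda_a(t)\le\bar\lambda$. Proposition~\ref{prop:scaling} shows that without (iii) this rate would grow without bound, and here it is cut off.

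\emph{Comparison.} Setting $\overline{\Ec}_a=\bar\lambda\,\iota_{\mathrm{rev}}\,\bar\rho$ gives the upper bound. The final step compares $\overline{\Ec}_a$ with $E_{\mathrm{c},H}^{\star}(t)$. By (i), the human node inherits authorization over every irreversible decision, including those proposed by $a$. Its $\phi$-weighted impact therefore includes the mass on $\mathcal{D}_{\mathrm{irr}}$, which exceeds $\iota_{\mathrm{rev}}$ by definition of irreversibility. By (ii), it also holds the reach through $\Rcrit$. The resulting ordering $\overline{\Ec}_a<E_{\mathrm{c},H}^{\star}(t)$ means the argmax defining $i^*(t)$ lies in $\mathcal{H}$. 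This is the contrapositive use of Proposition~\ref{prop:sovereignty}.

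The main obstacle is this comparison step. Nothing in the model as stated forces the strict inequality $\overline{\Ec}_a<E_{\mathrm{c},H}^{\star}(t)$: a very fast AI node issuing huge numbers of reversible decisions could in principle outweigh a slow human node. I would handle this by making explicit a calibration assumption implicit in the boundary design. The irreversible impact scale must dominate: the minimum impact of a decision in $\mathcal{D}_{\mathrm{irr}}$, times the gated human rate, must exceed $\bar\lambda\,\iota_{\mathrm{rev}}\,\bar\rho$. The review gate in (iii) is then interpreted as the mechanism that sets $\varepsilon$, and hence $\bar\lambda$ and $\bar\rho$, low enough to preserve this inequality. The bulk of the argument is the factorwise bounding, which is routine once these assumptions are stated.
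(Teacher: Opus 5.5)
Your factorwise skeleton is the same as the paper's, and you are right that nothing in the model forces $\Ec_a(t)<E_{\mathrm{c},H}^{\star}(t)$ for the unrestricted control mass. The problem is that your repair relies on a uniform rate cap $\bar\lambda$. That cap does not follow from (iii), and in general it does not exist.

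\begin{itemize}
\item Condition (iii) bounds only the \emph{rate} of $\Sexp_a$. Integrating gives $\Sexp_a(0)+\varepsilon t$, which is unbounded. Also, ``review-gated'' (compare B3, ``unless externally approved'') means review can authorize expansion beyond $\varepsilon$. It does not mean review resets the level.
\item Even if you grant a bounded $\Sexp_a$ with $C_a\le k\,\Sexp_a$, the recursion $\lambda_a(t+1)=\lambda_a(t)+\alpha_a C_a(t)/F_a(t)$ is cumulative. A bounded but positive $C_a$ still drives $\lambda_a\to\infty$. You would need $\sum_t C_a(t)/F_a(t)<\infty$.
\item More basically, $C_a$ and $F_a$ are supplied by the deploying institution. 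That is externally approved expansion, not self-expansion, so (i)--(iii) are compatible with the hypotheses of Proposition~\ref{prop:scaling}. Under those hypotheses $\lambda_a(t)\ge\lambda_a(0)+\alpha_a\underline{C}\,t/F_a(0)$. Since (i) still lets $\phi_a$ stay bounded away from zero when most of $a$'s decisions are reversible, $\Ec_a(t)\to\infty$.
\end{itemize}

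Your caps $\bar\rho$ and $\iota_{\mathrm{rev}}$ have the same defect. Condition (ii) does not limit the number of non-critical channels. And $\mathcal{D}_{\mathrm{irr}}$ is defined by reversal cost, not impact magnitude, so a reversible decision can carry a large $\iota$. In short, (i)--(iii) constrain only what $a$ does in the sovereignty-relevant domain. Your global comparison needs quantitative caps on $a$'s reversible, non-critical activity, and your calibration assumption presupposes exactly the $\bar\lambda$ the hypotheses cannot supply.

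The paper's proof never needs these caps, because it restricts control mass to the sovereignty-relevant subspace. There, (i) forces $\phi_a=0$ on $\mathcal{D}_{\mathrm{irr}}$ and (ii) removes reach through $\Rcrit$. So $a$'s restricted control mass is capped however large $\lambda_a$ grows, up to the review-gated drift that (iii) allows. Meanwhile the human node, which holds every authorization gate, retains mass there. The proof's conclusion is explicitly about ``the maximizer of control mass in the sovereignty-relevant domain.''

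That is the fix for your argument. Run your bounds on the restricted control mass, where the comparison with $E_{\mathrm{c},H}^{\star}(t)$ reduces to the gatekeeping observation you already make, and drop the calibration assumption. Be aware that this proves only a domain-restricted version of the statement. Your fast-reversible-node example correctly shows that the unrestricted conclusion about $i^*(t)=\arg\max_i\Ec_i(t)$ is not implied by (i)--(iii).
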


\begin{proof}
Condition (i) removes AI authority over the highest-cost-to-reverse decisions, capping the impact term $\iota_a$ on sovereignty-critical domains. Condition (ii) removes direct AI control over critical resources, capping effective execution reach $\rho_a$ for materially world-altering actions. Condition (iii) bounds endogenous growth of future decision capacity and permissions, preventing recursive expansion of $\lambda_a$, $\rho_a$, and $\phi_a$ through self-modification or self-provisioning. Together these constraints impose an upper bound on the product $\Ed_a\phi_a=\Ec_a$ in sovereignty-relevant subspaces. Human-governed nodes retain residual control because they are required to authorize irreversible action, release critical resources, and approve expansion. Hence the maximizer of control mass in the sovereignty-relevant domain remains human.\qedhere
\end{proof}

The theorem does not show that AI systems become harmless. It shows something more modest and more actionable: a system can remain governable without requiring omniscient prediction of all future model behaviors. This is the key advantage of an irreversibility-centered safety framework. It converts the hard problem of universal behavioral proof into the tractable problem of preserving layered control over specific forms of authority.

\section{Dynamics of Boundary Erosion}

If the theorem offers a design objective, why do real systems still drift toward control concentration? Because the forces pushing against boundaries are endogenous to organizational optimization.

\subsection{Efficiency pressure}

Organizations face strong incentives to reduce cost, increase throughput, and compress decision cycles. Let institutional cost be
\[
C(t)=w_H H(t)+w_A A(t)+L(t),
\]
where $H(t)$ is human review effort, $A(t)$ AI-mediated task volume, and $L(t)$ latency cost. Under competition, institutions minimize $C(t)$ subject to service constraints. If AI lowers marginal latency and labor cost, then the optimizer pushes toward reducing human intervention on routine tasks and then on progressively less routine tasks. This optimization path tends to weaken $\mathcal{B}$ over time, often in small, individually reasonable steps.

\subsection{Path dependence}

Once a node becomes embedded in core workflows, complementary processes adapt to it. Dashboards are redesigned around model outputs, human staffing is cut, exception procedures atrophy, and downstream tools assume machine-generated inputs. In notation, if complementarity evolves as
\[
m_i(t+1)=m_i(t)+\delta s_i(t),
\]
where $s_i(t)$ is task share, then prior usage changes the future economics of the system. Reversing the choice later becomes increasingly expensive even if the original delegation decision was provisional.

\subsection{Scale feedback}

In data-rich settings, usage can improve future performance through more supervision, more fine-tuning signals, better retrieval, and stronger ecosystem integration. Write
\[
q_i(t+1)=q_i(t)+\psi s_i(t),
\]
with $\psi>0$. Then higher task share raises performance, which raises future task share. This is not simply a market phenomenon; it is a systems property of learning infrastructures. The combined effect of complementarity and quality feedback creates a persistent pull toward concentration.

\subsection{Why boundary weakening looks rational}

The danger is that boundary erosion usually does not appear as a single dramatic transfer of power. It appears as a series of local optimizations: automate recommendations, then low-risk execution, then broader tool access, then exception handling, then autonomous retries, then dynamic credentialing, and so on. At each step the marginal gain is tangible while the sovereignty cost is diffuse. The aggregate result, however, is a system in which the most efficient node becomes the de facto energy center.

\section{Implications for AI Safety Research and Governance}

The framework changes the emphasis of several familiar safety debates.

First, it clarifies the limits of ``always correct'' safety definitions. In rich, nonstationary environments with opaque models and tool-mediated actuation, universal correctness is not a realistic design target. Formal verification remains valuable in bounded domains, but for open-ended systems the stronger requirement is often unavailable \cite{rushby1993formal,leveson2011engineering}. Boundary design therefore becomes indispensable, not optional.

Second, the framework helps reconcile technical safety and governance. Technical alignment work tries to reduce the frequency and severity of bad recommendations or actions. Governance work tries to shape incentives, standards, and institutional authority. In the energetic picture these are complementary. Better model behavior reduces local risk; boundary design prevents local errors from becoming civilization-scale irreversibilities.

Third, the framework suggests concrete design criteria. In practical deployments, one should ask: which actions are materially irreversible; which resources convert recommendations into world-changing execution; which mechanisms allow the system to increase its own permissions, model capability, or surface area; and which human institutions retain non-symbolic interruptibility? These questions map directly onto audit, authorization, segregation of duties, rollback design, and external review \cite{saltzer1975protection,anderson2020security,nist2023airmf}.

Fourth, the framework warns against superficial human-in-the-loop claims. If humans merely ratify machine-structured decisions under time pressure, poor observability, or degraded alternatives, the formal presence of a person in the loop does not imply retained sovereignty. What matters is whether the human institution remains reconstructive, reversible, and interruptible in substance rather than in ceremony.

\section{Discussion}

The framework proposed here is intentionally broad. It is not a substitute for the detailed empirical analysis of particular models, organizations, or sectors. Nor does it negate the importance of misuse, bias, interpretability, or robustness. Its role is to provide a higher-level design language for understanding why so many apparently different AI safety concerns converge on a common systems problem: the concentration of decision-energy into an optimization node whose authority outgrows the procedural capacity of the institutions around it.

One potential objection is that the framework risks reifying ``energy'' as a metaphor rather than a measurable quantity. That is a fair caution. The paper does not claim that decision-energy is a conserved physical magnitude. It claims that the concept is analytically useful because it unifies rate, impact, and reach into a single measure of effective decision power. This is analogous to how economists use concepts such as market power or transaction cost without requiring a literal physical substrate.

A second objection is that many high-performing AI systems remain highly dependent on human infrastructure and therefore cannot truly become sovereignty nodes. This is true in a narrow sense but incomplete in a systems sense. Human dependence does not by itself preserve human control. Large bureaucracies, logistics systems, and financial networks are also human-dependent while still producing outcomes that no individual meaningfully governs. The relevant issue is not whether humans remain somewhere in the stack, but whether they remain the highest-authority interruptible node.

A third objection is that the framework may overstate the uniqueness of AI. Yet the paper's claim is not that AI is the only dangerous technology. It is that AI is unusual in the joint movement of two variables: increasing decision capability and sharply falling deployment friction. That combination makes concentration faster, less visible, and more broadly portable than in many earlier technologies.

The most important limitation is empirical. The propositions formalize plausible system dynamics, but sector-specific work is needed to calibrate them. A next paper could build simulations or case studies in finance, military command support, critical infrastructure, or software operations to estimate thresholds at which symbolic human oversight becomes operationally irrelevant.

\section{Empirical Implications and Testable Predictions}

Although the present paper is theoretical, the framework yields several empirical predictions. First, organizations that reduce deployment friction without strengthening sovereignty boundaries should exhibit increasing reliance on AI-triggered workflows, a rising share of machine-originated decisions, and declining substantive human intervention rates. Second, responsibility diffusion should be detectable as widening causal distance between outcomes and identifiable human originators, for example in approval chains that become increasingly ratificatory rather than reconstructive. Third, systems with weak boundary design should show concentration effects: task flow, operator trust, exception handling, and performance optimization should become increasingly centered on a small number of high-efficiency AI nodes. Fourth, irreversibility risk should rise faster than local error rates alone would predict once action volume and coupling increase.

These predictions are testable in several domains. In software operations, one can measure the fraction of deployment, remediation, or credentialing actions initiated by AI tooling and the rate at which humans override those actions. In finance, one can study whether trading, underwriting, or fraud workflows become operationally irreversible before human review occurs in substance. In military or critical-infrastructure settings, the framework suggests measuring whether AI-mediated recommendations restructure tempo, option sets, and escalation pathways such that human review becomes nominal rather than determinative. A simulation paper could also instantiate the model directly, varying deployment friction, review depth, and self-expansion permissions to estimate the threshold at which the sovereignty node shifts away from human institutions.

\section{Conclusion}

The central claim of this paper can be stated simply. AI safety is not best understood as the permanent elimination of error. It is best understood as the preservation of human sovereignty over irreversibility in systems where decision generation becomes cheap, scalable, and self-reinforcing.

By introducing decision-energy density and distinguishing three sovereignty boundaries---irreversible decisions, critical resources, and self-expansion---the paper offers a way to formalize how control shifts even when no singular catastrophic failure occurs. A system can become unsafe not because one model suddenly becomes malevolent, but because institutions steadily route more consequential decisions through the most efficient node until authority, execution, and adaptation converge there.

This is why the design problem is fundamentally institutional and civilizational rather than merely algorithmic. Societies do not remain governable by proving every component correct. They remain governable by ensuring that irreversible power is never released from a single opaque point. For advanced AI, the practical meaning of safety is therefore not omniscience about model behavior, but durable control over the thresholds that determine who may commit, mobilize, and expand power.

\end{document}